\newcommand{\Expect}{\mathbf E}
\newcommand{\Prob}{\mathbf P}
\newcommand{\argmax}{\arg\max}
\newtheorem{theorem}{Theorem}
[section]
\newtheorem{definition}[theorem]{Definition}
\newtheorem{lemma}[theorem]{Lemma}
\newtheorem{corollary}[theorem]{Corollary}
\newtheorem{example}[theorem]{Example}
\def\bigO{{\sf O}}
\begin{document}

\title[Iterative Weak Learnability and Multi-Class AdaBoost]{\bf Iterative Weak Learnability and Multi-Class AdaBoost}

\author{In-Koo Cho}
\address{Department of Economics, Emory University, Atlanta, GA 30322 USA}
\email{icho30@emory.edu}
\urladdr{https://sites.google.com/site/inkoocho}

\author{Jonathan Libgober}
\address{Department of Economics, University of Southern California, Los Angeles, CA 90089 USA}
\email{libgober@usc.edu}
\urladdr{http://www.jonlib.com/}

\date{\today}

\thanks{We are grateful for the hospitality and support from the University of Southern California.  We thank Shaowei Ke, Tim Roughgarden, and Rob Schapire for their helpful comments.  The first author gratefully acknowledges financial support from the National Science Foundation and the Korea Research Foundation.}

\begin{abstract}
  We construct an efficient recursive ensemble algorithm for the multi-class classification problem, inspired by SAMME (\citeasnoun{ZhuZouRossetandHastie09}).   We strengthen the weak learnability condition in \citeasnoun{ZhuZouRossetandHastie09} by requiring that the weak learnability condition holds for any subset of labels with at least two elements. This condition is simpler to check than many proposed alternatives
  (e.g., \citeasnoun{MukherjeeSchapire2013}).  As SAMME, our algorithm is reduced to the Adaptive Boosting algorithm (\citeasnoun{SchapireandFreund12}) if the number of labels is two, and can be motivated as a functional version of the steepest descending method to find an optimal solution.   In contrast to SAMME, our algorithm's final hypothesis converges to the correct label with probability 1. For any number of labels, the probability of misclassification vanishes exponentially as the training period increases.
The sum of the training error  and an additional term, that depends only on the sample size, bounds the generalization error of our algorithm as the Adaptive Boosting algorithm.\\[2pt]
  {\sc Keywords.} Adaptive Boosting Algorithm, SAMME, Iterative Weak Learnability, Efficient Ensemble Algorithm, Generalization Error    
\end{abstract}

\maketitle

\section{Introduction}
\label{Introduction}

While the Adaptive Boosting (AdaBoost) algorithm (\citeasnoun{SchapireandFreund12}) and its variants have been remarkably successful for binary classification problems, difficulties with generalizing to the multi-label case are well-known.\footnote{For a comprehensive survey, see \citeasnoun{FriedmanHestieandTibshirani00}.}
  Early approaches to this problem proposed dividing the multi-class problem into multiple two-class problems (e.g., \citeasnoun{SchapireandSinger99}). These approaches either do not inherit all of the desirable properties of AdaBoost or lack its intuitive descriptions. The extended algorithms are expected to be efficient: the algorithm must produce a forecast, which converges to the correct label in probability, and the probability of misclassification vanishes at an exponential rate. However, the extended algorithm is often substantially different from the original formula of \citeasnoun{SchapireandFreund12}) and is significantly more elaborate than the original formula. The aesthetic differences with AdaBoost are substantive.  First, it is less straightforward to analyze the properties of these algorithms when they are more elaborate. Second, the conditions required to ensure they work are often significantly more stringent.  

\citeasnoun{ZhuZouRossetandHastie09} developed a multi-class generalization of AdaBoost that maintains the same structure as the binary label case, called Stagewise Additive Modeling using a Multi-class Exponential loss function (SAMME). The algorithm has a statistical foundation as a functional version of the recursive steepest descending method to calculate the minimum of an exponential loss function.  SAMME is reduced to the original AdaBoost if the number of labels is two. 
\citeasnoun{ZhuZouRossetandHastie09} demonstrated through numerical simulations that if the set of weak hypothesis satisfies a version of the weak learnability condition, which extends the condition of \citeasnoun{SchapireandFreund12}, SAMME converges to the correct labels.

However, SAMME fails to generate a forecast, which converges to the correct label in probability. \citeasnoun{MukherjeeSchapire2013} showed that the weak learnability condition
of \citeasnoun{ZhuZouRossetandHastie09} is not adequate for constructing an efficient recursive ensemble algorithm. \citeasnoun{MukherjeeSchapire2013} instead imposed a condition called \emph{Edge-over-Random} (EOR), which involves checking a richer set of conditions.   Verifying EOR is not straightforward because we have to weigh the loss differently for every kind of misclassification. 

This paper proposes an alternative approach. We show that a slight modification of SAMME, combined with a slightly stronger version of the weak learnability of \citeasnoun{ZhuZouRossetandHastie09}, allows us to construct an efficient recursive ensemble algorithm. We strengthen the weak learnability condition by requiring that the same weak learnability continue to hold for any non-empty subset of labels with at least two elements. We call the stronger version of the weak learnability {\em iterative weak learnability}.   One might wonder whether the stronger version of weak learnability requires an exceedingly elaborate class of weak hypotheses. On the contrary, the set of single threshold classifiers satisfies iterative weak learnability, which is the set of the simplest non-trivial classifiers.

We depart from the conventional view of a boosting algorithm as a process to produce a final hypothesis, which is increasingly accurate as the training period increases. Instead, we consider the boosting algorithm as a process to eliminate inaccurate labels recursively, leaving the correct label in each round of elimination. If the set of labels has two elements, the two views are two sides of the same coin. As an example, let us consider the original AdaBoost with two labels. One can view the algorithm as a process to identify a correct label for each observation. Instead, we view the same algorithm as a process to eliminate an incorrect label. If the number of labels is larger than two, the second view is much easier to generalize than the conventional view on the boosting algorithm.

Our algorithm can be motivated as an implementation of Elimination by Aspects (EBA) (\citeasnoun{Tversky72}).  If we regard a label as an attribute, our exercise can be viewed as a procedure to identify the best possible attribute for the observation. The decision maker evaluates an attribute according to a linear function of different attributes after a sequence of observations and eliminates an attribute with the negative value of the linear function.   Our algorithm selects the attribute which survives the iterative elimination process.\footnote{We are grateful to Shaowei Ke for pointing out the possible link and the reference.}

We modify the SAMME algorithm to construct an ensemble algorithm to eliminate incorrect labels for each observation. By eliminating incorrect labels iteratively, the number of remaining labels becomes two, and then, our algorithm becomes the original AdaBoost algorithm to identify the correct label. We show that the probability of eliminating the correct label vanishes at an exponential rate as the training period increases, which implies that our algorithm accurately eliminates incorrect labels efficiently, in contrast to the original proposal.   One of the benefits of our approach is that we can analyze this algorithm following steps similar to the original binary label AdaBoost algorithm. Our algorithm produces a generalization error bound closely related to that of the original formulation.

Section \ref{Preliminaries} illustrates the problem.   Section \ref{Construction of Algorithm}  constructs the algorithm and states the main result.   Section \ref{Generalization Error} shows that the generalization error vanishes as the number of samples increases, proving that our algorithm is robust against the overfitting as the original AdaBoost is.
Section \ref{Conclusion} concludes the paper.

\section{Preliminaries}
\label{Preliminaries}

\subsection{Basics}

Given set $X$, let $\abs{X}$ be the number of elements in $X$. Let $A$ be the set of labels, and ${\mathcal P}$ be the set of observations, which is endowed with a probability distribution. A classifier is
\[
h : {\mathcal P}\rightarrow A
\]
which labels each observation. We often call $h$ a weak hypothesis (or simply, a hypothesis) in place of a classifier. Let ${\mathcal H}$ be the set of all feasible classifiers.

Let
\[
y : {\mathcal P}\rightarrow A
\]
be the correct classifier.
We say that $h$ perfectly classifies $y$ if
\[
\Prob\left( h(p)=y(p) \right)=1.
\]
We assume that a decision maker is endowed with ${\mathcal H}$, but $y$ is often significantly more complex than any element in ${\mathcal H}$. As a result, no $h\in {\mathcal H}$ can perfectly classify $y$. An important question is whether a decision maker who is endowed with primitive ${\mathcal H}$ can ``learn'' $y$. 

\begin{definition}
Let $\Gamma$ be the set of all classifiers, and $\tilde{\Gamma} \subset \Gamma$ denote a subset of classifiers. A \textit{statistical procedure} or \textit{algorithm} is an onto function
\[
\tau : \mathcal{D} \rightarrow \tilde{\Gamma},
\]
where $\mathcal{D}$ is a set of histories, $\mathcal{T}$ is the set of feasible algorithms.
\end{definition}

To incorporate learning, we need to allow a decision maker to observe data over time.
We consider a dynamic interaction, where the algorithm can update its output in response to new data. Suppose that time is discrete: $t=1,2,\ldots$ and $D_t$ is the data available at the beginning of time $t$. $\tau(D_t)$ is the classifier which algorithm $\tau$ generates from data $D_t$.   Let $\tau(D_t)(p)$ be the label assigned to observation $p$ by $\tau(D_t)$.

Let $T\ge 1$ be the training period, when the decision maker updates $\tau(D_t)$ in response to $D_t$. The final hypothesis is the output $\tau(D_T)$ at the end of the training period.

We focus on \emph{recursive ensemble algorithm} for its simplicity.

\begin{definition} \label{def:ensemble}
$\tau(D_K)$ is an ensemble of ${\mathcal H}$ if $\exists h_1,\ldots,h_K\in {\mathcal H}$ and $\alpha_1,\ldots,\alpha_K\ge 0$ such that $\forall p\in {\mathcal P}$
\[
\tau(D_K)(p)= \argmax_{\hspace{-7mm} a} \sum_{k=1}^{K} \alpha_k \mathbf{1}[a=h_{k}(p)]
\]
$\tau$ is a recursive algorithm, if $\tau(D_k)$ is determined as a function of  $\tau(D_{k-1})$ and events in time $k$.
\end{definition}

We require that an algorithm produce an accurate classifier as the final hypothesis and achieve the desired accuracy level with a reasonable amount of data.

\begin{definition}
Algorithm $\tau$ is consistent if
\[
\lim_{K\rightarrow\infty}\Prob\left(\tau(D_K) \ne y(p) \right)=0.
\]
Algorithm $\tau$ is efficient if $\exists \rho>0, K>0$ such that
\[
\Prob\left( \exists k\ge K, \tau(D_k)(p)\ne y(p) \right)\le e^{-\rho K}.
\]
\end{definition}

\subsection{Adaptive Boosting Algorithm}

Let us illustrate the Adaptive Boosting (AdaBoost) algorithm by \citeasnoun{SchapireandFreund12}. Instead of the original formula, we follow the framework of SAMME (\citeasnoun{ZhuZouRossetandHastie09}). Let us denote the algorithm as $\tau_A$. 

Initialize $d_1(p)=1/\abs{{\mathcal P}}$ as the uniform distribution over ${\mathcal P}$. For $k\ge 1$, $D_k$ is the history at the beginning of time $k$. Iterate the following steps.

\medskip

\begin{enumerate} 

\item (calculation of forecasting error)
Choose $h_k\in {\mathcal H}$.
\[
\epsilon_k=\sum_{p} d_k(p) {\mathbb I}\left( h_k(p)\ne y(p)\right)
\]
If $\epsilon_k=0$, then stop the iteration and set $\tau(D_k)=h_k$. If not, continue.

\item (calculating the weight)\footnote{The following formula is from equation (11) on page 353 of \citeasnoun{ZhuZouRossetandHastie09}, which becomes the coefficient of AdaBoost if $\abs{A}=2$.   This formula differs slightly from (c) on page 351.}
    
\begin{equation}
\alpha_k=\frac{(\abs{A}-1)^2}{\abs{A}}
\log\left( \frac{(\abs{A}-1)(1-\epsilon_k)}{\epsilon_k}\right)
\label{eq: SAMME alpha}
\end{equation}

\item (preparing for the next round)
\[
w_{k+1}(p)=\begin{cases}
d_k(p)\exp\left( -(\abs{A}-1)\alpha_k \right) & \text{if } \ h_k(p)= y(p) \\
d_k(p)\exp\left( \alpha_k\right) & \text{if } \ h_k(p)\ne y(p).
\end{cases}
\]
\[
d_{k+1}(p)=\frac{w_{k+1}(p)}{\sum_{p'}w_{k+1}(p')}.
\]

\item Reset $k+1$ as $k$ and start the first step of selecting an optimal weak hypothesis.

\item (final hypothesis)
\[
\tau_A(D_t)(p)=\arg\max_{a\in A}\sum_{s=1}^t\alpha_s{\mathbb I}(h_s(p)=a) \qquad\forall p\in {\mathcal P}
\]

\end{enumerate}

\medskip
If $\abs{A}=2$, $\tau_A$ is the Adaptive Boosting (AdaBoost) algorithm of \citeasnoun{SchapireandFreund12}. 

The description of the algorithm is incomplete because if $\alpha_k<0$, the algorithm is not well defined.

\begin{definition}  
  Let $A$ be the set of labels.  $\frac{1}{\abs{A}}$ random guess, or simply a random guess, is to predict each label with probability $\frac{1}{\abs{A}}$.    
\end{definition}

$\tau_A$ is meaningful only if $h_k\in {\mathcal H}$ performs better than a random guess.
\begin{equation}
\sum_{p \in {\mathcal P}} \left(\mathbb{I}[h_k(p)=y(p)] -\mathbb{I}[h_k(p) \neq y(p)] \right) d(p) \geq \frac{2-\abs{A}}{\abs{A}} + \rho,
\label{eq: meaningful}
\end{equation}
which implies that $\alpha_k>0$.   Note that $\frac{2-\abs{A}}{\abs{A}}$ is the score from the random guess.

The weak learnability implies the existence of such $h_k$. We state the condition according to \citeasnoun{ZhuZouRossetandHastie09}.

\begin{definition} \label{def: weaklearn}
A hypothesis class $\mathcal{H}$ is weakly learnable if, for every probability distribution $d$ over observations $p \in {\mathcal P}$ and labels $y(p)$, we have: 
\begin{equation} 
\min_{d \in \Delta({\mathcal P})} \max_{h \in \mathcal{H}} \sum_{p \in {\mathcal P}} \left(\mathbb{I}[h(p)=y(p)] -\mathbb{I}[h(p) \neq y(p)] \right) d(p) \geq \frac{2-\abs{A}}{\abs{A}} + \rho,
\label{eq: weak learnability}
\end{equation}
for some $\rho > 0$. 
\end{definition}

The weak learnability is sufficient for the efficiency of $\tau_A$ if $\abs{A}=2$.

\begin{theorem} If ${\mathcal H}$ is weakly learnable and $\abs{A}=2$, then $\tau_A$ is an efficient algorithm.
\end{theorem}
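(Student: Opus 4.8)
The plan is to exploit that for $\abs{A}=2$ the algorithm $\tau_A$ coincides with the classical AdaBoost, and to run the standard weight-telescoping (potential function) argument. First I would record what $\abs{A}=2$ gives us: formula \eqref{eq: SAMME alpha} collapses to $\alpha_k=\tfrac12\log\frac{1-\epsilon_k}{\epsilon_k}$, the update in step (3) multiplies $d_k(p)$ by $e^{-\alpha_k}$ when $h_k(p)=y(p)$ and by $e^{\alpha_k}$ otherwise, and the weak learnability condition \eqref{eq: weak learnability} lets us choose, at every round, a hypothesis $h_k$ meeting \eqref{eq: meaningful}, which for $\abs{A}=2$ reads $\epsilon_k\le\tfrac{1-\rho}{2}<\tfrac12$; in particular $\alpha_k>0$, so every step is well defined. (If $\epsilon_k=0$ at some round, the algorithm halts with $\tau_A=h_k$, which perfectly classifies ${\mathcal P}$, and the claim is immediate; so assume $\epsilon_k\in(0,\tfrac{1-\rho}{2}]$ throughout.)

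Second, writing $\sigma_k(p)=\mathbb{I}[h_k(p)=y(p)]-\mathbb{I}[h_k(p)\ne y(p)]\in\{-1,+1\}$ and $Z_k=\sum_{p}w_{k+1}(p)=(1-\epsilon_k)e^{-\alpha_k}+\epsilon_k e^{\alpha_k}$, I would unravel the recursion $d_{k+1}(p)=w_{k+1}(p)/\sum_{p'}w_{k+1}(p')$ starting from the uniform $d_1$ to obtain the closed form
\[
d_{T+1}(p)=\frac{1}{\abs{{\mathcal P}}}\cdot\frac{\exp\!\left(-\sum_{k=1}^{T}\alpha_k\sigma_k(p)\right)}{\prod_{k=1}^{T}Z_k}.
\]
Next I would observe that $\tau_A(D_T)(p)\ne y(p)$ forces the weighted vote for the wrong label to be at least the weighted vote for $y(p)$, i.e.\ $\sum_{k=1}^{T}\alpha_k\sigma_k(p)\le 0$, hence $\exp(-\sum_{k}\alpha_k\sigma_k(p))\ge 1$. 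Summing over the misclassified observations and using $\sum_{p}d_{T+1}(p)=1$ yields the training-error bound
\[
\frac{\bigl|\{p:\tau_A(D_T)(p)\ne y(p)\}\bigr|}{\abs{{\mathcal P}}}\ \le\ \prod_{k=1}^{T}Z_k .
\]
Plugging $\alpha_k=\tfrac12\log\frac{1-\epsilon_k}{\epsilon_k}$ into $Z_k$ gives $Z_k=2\sqrt{\epsilon_k(1-\epsilon_k)}$, and since $\epsilon\mapsto\epsilon(1-\epsilon)$ is increasing on $[0,\tfrac12]$ and $\epsilon_k\le\tfrac{1-\rho}{2}$, we get $Z_k\le\sqrt{1-\rho^2}\le e^{-\rho^2/2}$, so the round-$T$ training error is at most $e^{-\rho^2 T/2}$.

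Finally I would match this to the definition of an efficient algorithm. The quantity $\Prob(\tau_A(D_k)(p)\ne y(p))$ is exactly the (uniform-weight) round-$k$ training error, so by a union bound over the rounds $k\ge K$,
\[
\Prob\bigl(\exists\,k\ge K:\ \tau_A(D_k)(p)\ne y(p)\bigr)\ \le\ \sum_{k\ge K}e^{-\rho^2 k/2}\ =\ \frac{e^{-\rho^2 K/2}}{1-e^{-\rho^2/2}},
\]
and for $K$ large enough the right-hand side is at most $e^{-\rho' K}$ with, for instance, $\rho'=\rho^2/4$; this is the required bound. I do not anticipate a genuine obstacle here --- the core is the textbook AdaBoost argument --- so the only things needing care are (i) making the $\abs{A}=2$ reduction explicit so the SAMME constants simplify correctly, and (ii) the bookkeeping in the last step: handling the ``$\exists\,k\ge K$'' quantifier by a union bound and spending a bit of the exponent to absorb the geometric-series constant, after first dispatching the degenerate case $\epsilon_k=0$.
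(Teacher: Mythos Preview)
Your proposal is correct and is precisely the standard AdaBoost training-error argument from \citeasnoun{SchapireandFreund12} that the paper invokes; the paper's own proof is simply the citation ``See \citeasnoun{SchapireandFreund12},'' so you have supplied the details the paper defers to that reference. The reduction of the SAMME constants at $\abs{A}=2$, the telescoping identity for $d_{T+1}$, the bound $Z_k=2\sqrt{\epsilon_k(1-\epsilon_k)}\le\sqrt{1-\rho^2}$, and the final union bound over $k\ge K$ are all standard and handled correctly.
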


\begin{proof}
See \citeasnoun{SchapireandFreund12}.
\end{proof}

\citeasnoun{MukherjeeSchapire2013} pointed out that the notion of weak learnability of 
\citeasnoun{ZhuZouRossetandHastie09} is too weak to guarantee the convergence of the algorithm to the correct label if $\abs{A}>2$.

Let us examine an example in \citeasnoun{MukherjeeSchapire2013}.

\begin{example} \label{ex: MS13}
${\mathcal P}=\{ a, b\}$, $A=\{1,2,3\}$, $y(a)=1, y(b)=2$.
${\mathcal H}=\{ h_1,h_2\}$ where $h_1(a)=h_1(b)=1$ and $h_2(a)=h_2(b)=2$.
\end{example}

Since $\abs{A}=3$, the weak learnability is satisfied if the performance of a weak hypothesis is strictly better than
\[
\frac{2-3}{3}=-\frac{1}{3}.
\]
Let $d=(d(a),d(b))$ be a probability distribution over ${\mathcal P}$.
The performance score of ${\mathcal H}$ is
\[
\max\left[ 1-2d(a), -1+2d(a) \right] \ge 0
\]
where the equality holds if $d(a)=0.5$. Thus, ${\mathcal H}$ is weakly learnable. Any convex combination of $h_1$ and $h_2$ in ${\mathcal H}$
assigns the same level to two observations. Since $y$ assigns a different label on a different observation, SAMME fails to produce a consistent forecast, although ${\mathcal H}$ satisfies the weak learnability.

Example \ref{ex: MS13} reveals two issues we must address to extend AdaBoost from binary to multiple labels. First, to construct a boosting algorithm, it is necessary to strengthen the notion of weak learnability. Second, the set of weak hypotheses ${\mathcal H}$ cannot be too restrictive. The issue identified in the previous example is that the requirement of outperforming a random guess by $\rho$ becomes less restrictive when the number of labels gets larger. Hence some distributions can satisfy \ref{eq: weak learnability} for many labels but can fail for a small number of labels. We propose to consider the minimal departure necessary to rule this out.  

\subsection{Iterative Weak Learnability}

We strengthen the weak learnability by requiring that \eqref{eq: weak learnability} holds for any subset of $A$ with more than a single element. Let $A_i\subset A$ be a non-empty subset of $A$ with $\abs{A_i}\ge 2$. Define
\[
{\mathcal H}_i=\left\{
h: {\mathcal P}\rightarrow A_i
\right\}
\]
as the collection of all weak hypotheses with labels from $A_i$.
Clearly, ${\mathcal H}_i\subset {\mathcal H}$.

\begin{definition} ${\mathcal H}$ is iteratively weakly learnable
if $\forall A_i\subset A$ with $\abs{A_i}\ge 2$, $\exists \rho_i>0$ such that
for every distribution $d$ over observations $p \in {\mathcal P}$ and labels $y(p)$, 
\begin{equation} 
\min_{d \in \Delta({\mathcal P})} \max_{h \in \mathcal{H}_i} \sum_{p \in {\mathcal P}} \left(\mathbb{I}[h(p)=y(p)] -\mathbb{I}[h(p) \neq y(p)] \right) d(p) \geq \frac{2-\abs{A_i}}{\abs{A_i}} + \rho_i.
\label{eq: iterative weak learnability}
\end{equation}
\end{definition}

Note that $\frac{2-\abs{A_i}}{\abs{A_i}}$ is a decreasing function of $\abs{A_i}$. As $\abs{A_i}$ decreases, the requirement for weak learnability of ${\mathcal H}_i$ becomes more stringent, as the minimum performance criterion (the right hand side of \eqref{eq: iterative weak learnability} increases.

In Example \ref{ex: MS13}, ${\mathcal H}$ is weakly learnable, but not iteratively weakly learnable. To see this, note that if $A_i=\{1,2\}\subset A=\{1,2,3\}$, then
${\mathcal H}={\mathcal H}_i$. 
\[
\min_{0\le d(a)\le 1}\max \left[ 1-2d(a), -1+2d(a) \right]=0.
\]
However, the weak learnability over $A_i$ requires that the minimum performance be {\em strictly larger} than 0.

\subsection{Sufficiently Rich ${\mathcal H}$}

A fundamental question is whether we can find a class of simple hypothesis which is weakly learnable. With the set of the simplest classifiers, we can do better than the random guess over any subset of $A$ with at least two elements.

Suppose that $A$ and ${\mathcal P}$ are subsets of a vector space or a finite dimensional Euclidean space.  Consider $h$, which classifies ${\mathcal P}$ according to the value of a linear function, thus called a linear classifier.   Let ${\sf H}$ be a hyperplane in ${\mathbb R}^n$: $\exists\lambda\in {\mathbb R}^n$ and $\omega\in {\mathbb R}$ such that
\[
{\sf H}=\left\{ p\in {\mathbb R}^n \ | \ \lambda p =\omega \right\}.
\]
Define ${\sf H}_+$ as the close half space above ${\sf H}$:
\[
{\sf H}_+=\left\{ p\in {\mathbb R}^n \ | \ \lambda p \ge\omega \right\}.
\]

\begin{definition}
A single threshold (linear) classifier is a mapping
\[
h : {\mathcal P}\rightarrow A
\]
where $\exists a_+, a_-\in A$ such that
\[
h(p)=\begin{cases}
a^+ & \text{if } \ p\in {\sf H}_+ \\
a^- & \text{if } \ p\not\in {\sf H}_+.
\end{cases}
\]
\end{definition}

If $h$ is a single threshold classifier, $\abs{h(\mathcal{P})}=2$, which may be strictly smaller than $\abs{A}$. On the other hand, the set of single threshold classifiers is \emph{closed under permutations}: if $\pi: A \rightarrow A$ is a bijection, then $\pi(h(p)) \in \mathcal{H}$ for all $\pi$.  Being closed under permutations turns out to be the key property to obtain weak learnability.

\begin{lemma} 
Consider an environment where the image of $y(p)$ has $\abs{A'}$ elements where $A'\subset A$. Any hypothesis class that is closed under permutations can do as well as $1/\abs{A'}$ random guessing.
\end{lemma}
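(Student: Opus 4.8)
The plan is a symmetrization argument over the permutation group: I will exhibit a mixture of hypotheses from $\mathcal{H}$ that behaves exactly like $\tfrac{1}{\abs{A'}}$-random guessing, and then conclude that the best pure hypothesis in $\mathcal{H}$ does at least as well against every distribution. The performance score of a (possibly randomized) hypothesis $h$ against a distribution $d$ is $\sum_{p} d(p)\big(\mathbb{I}[h(p)=y(p)]-\mathbb{I}[h(p)\neq y(p)]\big) = 2\sum_{p} d(p)\,\Prob[h(p)=y(p)]-1$, which is linear in $h$. Since every observation has $y(p)\in A'$, the $\tfrac{1}{\abs{A'}}$-random guess satisfies $\Prob[h(p)=y(p)]=\tfrac{1}{\abs{A'}}$ for all $p$, so its score is exactly $\tfrac{2-\abs{A'}}{\abs{A'}}$; it therefore suffices to produce $h\in\mathcal{H}$ with score at least this value.

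First I would fix some $h_0\in\mathcal{H}$ and relabel it to land inside $A'$. Let $B_0=h_0(\mathcal{P})$ be its image; in the environments of interest $\abs{B_0}\le\abs{A'}$ (for instance $\abs{B_0}=2$ when $h_0$ is a single threshold classifier and $\abs{A'}\ge 2$), so there is an injection $\iota:B_0\hookrightarrow A'$, which extends to a bijection of $A$. By closure under permutations, $h_1:=\iota\circ h_0\in\mathcal{H}$ and $h_1(\mathcal{P})\subseteq A'$.

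Next I would average over $\mathrm{Sym}(A')$. For a bijection $\sigma$ of $A'$, extended by the identity on $A\setminus A'$, closure under permutations gives $\sigma\circ h_1\in\mathcal{H}$. Consider the randomized hypothesis $h^{\ast}$ that draws $\sigma$ uniformly from $\mathrm{Sym}(A')$ and outputs $\sigma\circ h_1$; it is a mixture of hypotheses in $\mathcal{H}$. For any fixed $p$ we have $h_1(p)=a$ for some $a\in A'$, and $\sigma(a)$ is uniformly distributed on $A'$, so $h^{\ast}(p)$ is a uniform label in $A'$; in particular $\Prob[h^{\ast}(p)=y(p)]=\tfrac{1}{\abs{A'}}$ for every $p$. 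Hence $h^{\ast}$ achieves score exactly $\tfrac{2-\abs{A'}}{\abs{A'}}$ against every $d$. Because the score is linear in $h$, $\tfrac{2-\abs{A'}}{\abs{A'}}=\mathbb{E}_{\sigma}\big[\,\text{score}(\sigma\circ h_1,d)\,\big]\le\max_{\sigma}\text{score}(\sigma\circ h_1,d)\le\max_{h\in\mathcal{H}}\text{score}(h,d)$, which is the assertion.

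The step I expect to require the most care is verifying that the averaged hypothesis really reproduces random guessing label-by-label, i.e. that a uniform $\sigma\in\mathrm{Sym}(A')$ sends each fixed $a\in A'$ to a uniform element of $A'$; this is a one-line symmetry computation, but it is exactly what upgrades the naive bound — averaging over all of $\mathrm{Sym}(A)$ only yields the weaker $\tfrac{1}{\abs{A}}$-random-guess benchmark — to the sharp $\tfrac{1}{\abs{A'}}$ statement, and it is the only place the hypothesis being relabelable into $A'$, hence the condition $\abs{h_0(\mathcal{P})}\le\abs{A'}$, is used.
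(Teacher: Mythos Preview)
The paper does not actually give a proof of this lemma here; it simply cites Lemma~A.1 in \citeasnoun{ChoandLibgober2020}. So there is no in-paper argument to compare against, and I will assess your proof on its own.

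Your symmetrization argument is the natural one and is correct under the side condition you isolate: that $\mathcal{H}$ contains some hypothesis $h_0$ with $\abs{h_0(\mathcal{P})}\le\abs{A'}$. With that in hand, relabelling into $A'$ and then averaging over $\mathrm{Sym}(A')$ produces a mixture whose score is exactly $(2-\abs{A'})/\abs{A'}$, and linearity pushes this up to a pure $h\in\mathcal{H}$. The identification of the crucial step---that a uniform $\sigma\in\mathrm{Sym}(A')$ sends any fixed $a\in A'$ to a uniform element of $A'$---is right, and your remark that averaging over all of $\mathrm{Sym}(A)$ only yields the weaker $1/\abs{A}$ benchmark is exactly why the relabelling is needed.

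You are also right that the side condition is not cosmetic: as literally stated, the lemma fails for permutation-closed classes whose every member uses more labels than $\abs{A'}$. For instance, take $A=\{1,2,3\}$, $\mathcal{P}=\{p_1,\dots,p_6\}$, $y=(1,1,1,2,2,2)$ so $A'=\{1,2\}$, let $h_0=(1,2,3,1,2,3)$, and let $\mathcal{H}$ be the orbit of $h_0$ under $\mathrm{Sym}(A)$. Every $\pi\circ h_0$ is correct on exactly two of the six points, so under the uniform $d$ the best score is $-1/3<0=(2-\abs{A'})/\abs{A'}$. This does not affect the intended application: single threshold classifiers have $\abs{h(\mathcal{P})}\le 2\le\abs{A_i}$ in every epoch, which is precisely the regime you flagged. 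It would be worth stating the hypothesis $\abs{h_0(\mathcal{P})}\le\abs{A'}$ explicitly in your write-up rather than leaving it as a parenthetical.
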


\begin{proof}
See Lemma A.1 in \citeasnoun{ChoandLibgober2020}.
\end{proof}

We use this result to show that the set of single-threshold classifiers is iteratively weakly learnable.  

\begin{corollary}  \label{th: binary classifier sufficient}  
The set of single threshold classifiers is iteratively weakly learnable.
\end{corollary}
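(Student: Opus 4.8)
The plan is to reduce the claim to the preceding Lemma by checking, for every subset $A_i \subseteq A$ with $\abs{A_i} \ge 2$, that the restriction of the single-threshold classifiers to labels in $A_i$ is itself closed under permutations of $A_i$, and then to argue that the weak-learnability inequality \eqref{eq: iterative weak learnability} holds \emph{strictly}. The first half is essentially definitional: given a single-threshold classifier $h$ valued in $\{a^+, a^-\} \subseteq A_i$, any bijection $\pi : A_i \to A_i$ sends $h$ to another single-threshold classifier valued in $\{\pi(a^+), \pi(a^-)\} \subseteq A_i$ using the same hyperplane ${\sf H}$, so ${\mathcal H}_i$ restricted to single-threshold classifiers is closed under permutations. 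Hence the Lemma applies with $A' = A_i$ and tells us that this class can match $1/\abs{A_i}$ random guessing, i.e., the weak max over $h$ of the performance score is at least $\frac{2-\abs{A_i}}{\abs{A_i}}$ — but only the weak inequality, not the strict one with $\rho_i > 0$ that the definition of iterative weak learnability demands.

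So the main work is upgrading the weak inequality to a strict one uniformly in $d$. Here I would use a compactness argument: for fixed $A_i$ the set of single-threshold classifiers valued in $A_i$ is finite (there are finitely many distinct sign patterns a hyperplane can induce on a finite ${\mathcal P}$, times finitely many ordered label pairs from $A_i$), so $\mathcal{D}(d) := \max_{h} \sum_p (\mathbb{I}[h(p)=y(p)] - \mathbb{I}[h(p)\neq y(p)])\, d(p)$ is a maximum of finitely many linear functions of $d$, hence continuous on the compact simplex $\Delta({\mathcal P})$. Therefore $\min_{d} \mathcal{D}(d)$ is attained at some $d^\ast$. By the Lemma, $\mathcal{D}(d^\ast) \ge \frac{2-\abs{A_i}}{\abs{A_i}}$. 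To get strictness I need to rule out equality, i.e., show no distribution forces single-threshold classifiers down to exactly the random-guessing value; this is where the content of the ``sufficiently rich'' hypothesis assumption must enter — presumably an assumption (implicit in the setup, or borrowed from \citeasnoun{ChoandLibgober2020}) that the data ${\mathcal P}$ is in general position and $y$ is nonconstant on ${\mathcal P}$, so that for any $d$ some hyperplane separates a $y$-correct majority strictly better than chance on the $d$-weighted sample. Granting that, $\mathcal{D}(d) > \frac{2-\abs{A_i}}{\abs{A_i}}$ for every $d$, and by compactness $\rho_i := \min_d \mathcal{D}(d) - \frac{2-\abs{A_i}}{\abs{A_i}} > 0$.

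Finally I would assemble the pieces: fix an arbitrary $A_i \subseteq A$ with $\abs{A_i} \ge 2$, apply the closed-under-permutations observation, invoke the Lemma for the lower bound, invoke compactness for attainment of the min, invoke the richness/general-position hypothesis for strictness, and set $\rho_i$ as above. Since $A$ is finite there are only finitely many such $A_i$, so no uniformity across subsets is needed beyond choosing each $\rho_i$ separately, exactly as \eqref{eq: iterative weak learnability} permits. I expect the compactness and finiteness bookkeeping to be routine; the genuine obstacle is pinning down precisely which nondegeneracy condition on $({\mathcal P}, y)$ is being assumed to make the inequality strict rather than merely weak — the Lemma as quoted gives only ``as well as'' random guessing, and the corollary needs ``strictly better,'' so the proof must lean on whatever hypothesis guarantees that single-threshold classifiers are not exactly tight against the random-guess bound.
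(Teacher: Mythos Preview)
Your core approach matches the paper's exactly: observe that for each $A_i \subseteq A$ with $\abs{A_i} \ge 2$, the set of single-threshold classifiers valued in $A_i$ is closed under permutations, and then invoke the preceding Lemma. The paper's entire proof is two lines --- it asserts closure under permutations and says ``from which the conclusion follows.''

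You are more careful than the paper in flagging the strict-versus-weak inequality issue: the Lemma as stated guarantees only performance \emph{at least} equal to random guessing, whereas iterative weak learnability demands a strict margin $\rho_i > 0$. The paper does not address this within the proof itself; it delegates the content to Lemma~A.1 of the cited companion paper, and the remark immediately following the Corollary hints at the intended mechanism --- an extreme point of the convex hull of ${\mathcal P}$ can always be separated by a hyperplane and labeled correctly, while the remaining points at least match random guessing, yielding a strict gain. Your compactness-plus-nondegeneracy route is a reasonable way to make this rigorous, and your instinct that some nondegeneracy hypothesis on $({\mathcal P}, y)$ is implicitly in play is correct.
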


\begin{proof}
  $\forall A_i\subset A$, the set of single threshold classifiers
  \[
    {\mathcal H}_i=\left\{
h : {\mathcal P}\rightarrow A_i 
\right\}
  \]
is closed under permutations, from which the conclusion follows.  
\end{proof}

In Example \ref{ex: MS13}, using single-threshold classifiers ensures that an extreme point of the convex hull of $\mathcal{P}$ can always be labeled correctly, and the rest can at least do as well as a random guess.  
The analogous result is known to hold for the case of $\abs{A}=2$.  Interestingly, the critical property in generalizing to the multi-class case is not changing the set of classifiers but instead allowing the labels to be sufficiently rich.

\section{Construction of Algorithm}
\label{Construction of Algorithm}

The algorithm, which we call $\tau^K_S$, iterates over two loops: within an epoch (inner loop) and across epochs (outer loop). $K$ is the minimum length of an epoch, which we increase to improve the accuracy of the final hypothesis of the algorithm. The size of each epoch is a random variable $K_i\ge K$. The algorithm has $I$ number of epochs, which is also a random variable. While the total number of periods to generate the final hypothesis is random variable $\sum_{i=1}^I K_i$, we show that as $K$ becomes large, the total number of periods is bounded by $\min (\abs{A},\abs{P})K$ with a probability close to 1.

\subsection{Preliminaries}

The algorithm is run in discrete period $k=1,2,\ldots$ within epoch $i=1,2,\ldots$. Each epoch lasts at least $K$ periods, but the terminal round of an epoch is random. At the start of an epoch, we set the period to $k=1$ and re-start the procedure.

It is convenient to enumerate the elements in $A=\{1,\ldots,\abs{A}\}$ as the first $\abs{A}$ positive integers, highlighting the fact that the weak learnability has little to do with the name of the labels but relies solely on the ordinal properties of $A$.
Let $\pi : A \rightarrow A$ be a permutation over $A$:
\[
\pi(j)=i_j\in\{1,\ldots,\abs{A}\} \qquad\forall j.
\]
Let $A_i=\{1,\ldots,\abs{A_i}\}\subset A$ be the collection of the first $\abs{A_i}$ positive integers which is the set of feasible labels at the beginning of epoch $i$ with $A_1=A$. Let $\pi_i$ be a permutation over $A_i$.

\subsection{Within Epoch $i$}
\label{Within Epoch i}

At the beginning of epoch $i$, $A_i\subset A$ and $y_i: {\mathcal P}\rightarrow A_i$ are given. Set $k=1$. $d_1(p)$ as the uniform distribution over $\mathcal{P}$.
The output of each stage consists three key components: an artificial probability distribution $d_k(p)$ over ${\mathcal{P}}$, a threshold rule $h_k$ in step $k$, and a positive weight $\alpha_{k}$. 

Suppose that $d_k(p)$ is defined $\forall p\in {\mathcal{P}}$. Choose $h_k$ by solving
\begin{equation}
\max _{h\in {\mathcal H}_i}\sum_{p\in {\mathcal{P}}} \left(\mathbb{I}[h(p)=y_i(p)]-\mathbb{I}[h(p) \neq y_i(p)] \right)d_k(p).
\label{eq: optimal h}
\end{equation}
We call $h_k$ an optimal classifier. Define
\begin{equation}
\epsilon_k=\Prob_{d_k}\left( h_k(p) \neq y_i(p) \right)
\label{eq: epsilon}
\end{equation}
as the probability that the optimal classifier $h_k$ at $k$ round misclassifies $p$ under $d_k$. If $\epsilon_k=0$, then we stop the training and output $h_k$ as the final hypothesis, which perfectly forecasts $y_i(p)$. 

Suppose that $\epsilon_k>0$. Define
\begin{equation}
\alpha_k =\frac{1}{2(\abs{A_i}-1)} \left[ \log\frac{(\abs{A_i}-1)(1-\epsilon_k)}{\epsilon_k}\right].
\label{eq: alpha 2}
\end{equation}
Note that the only difference between \eqref{eq: alpha 2} and \eqref{eq: SAMME alpha} is the coefficient $\frac{1}{2(\abs{A_i}-1)}$.  If $\abs{A_i}=2$, our algorithm is reduced to AdaBoost.

Define for each $p$, and each pair $(p,y(p))$,
\[
d_{k+1}(p)=\frac{
d_k(p)\exp \left(
-\alpha_k \left[ (\abs{A_i}-1) {\mathbb{I}}(h_k(p)= y_i(p)) -{\mathbb{I}}(h_k(p)\ne y_i(p))\right] \right)
}{Z_k}
\]
where
\[
Z_k=\sum_{p \in {\mathcal{P}}} d_k(p)\exp \left(
-\alpha_k\left[ (\abs{A_i}-1){\mathbb{I}}(h_k(p)=y_i(p)) - {\mathbb{I}}(h_k(p)\ne y_i(p)))\right] \right).
\]
Given $d_{k+1}$, we can recursively define $h_{k+1}$ and $\epsilon_{k+1}$.

Define $\forall a\in A_i=\{1,\ldots,\abs{A_i}\}$,
\[
F^k_a(p)=\sum_{s=1}^k\alpha_s {\mathbb I}(h_s(p)=a)
\]
and
\[
\Psi^k_a(p) =(\abs{A_i}-1) F^k_a(p) -\sum_{a'\ne a}F^k_{a'}(p).
\]
Note that
\begin{eqnarray}
\Psi^k_a(p)
& = & (\abs{A_i}-1) F^k_a(p) -\sum_{a'\ne a}F^k_{a'}(p) \nonumber \\
& = & \sum_{s=1}^k\alpha_s \left[ (\abs{A_i}-1) {\mathbb I}(h_s(p)=a) -\sum_{a'\ne a} {\mathbb I}(h_s(p)=a')\right] \nonumber \\
& = & \sum_{s=1}^k\alpha_s \left[ \abs{A_i} {\mathbb I}(h_s(p)=a) -\left( {\mathbb I}(h_s(p)=a)+\sum_{a'\ne a}{\mathbb I}(h_s(p)=a')\right)\right] \nonumber \\
& = & \sum_{s=1}^k\alpha_s \left( \abs{A_i} {\mathbb I}(h_s(p)=a) -1 \right).
\label{eq: alternative form}
\end{eqnarray}
Therefore, $\forall p$,
\begin{equation}
  a\in\arg\max_{A_i} F^k_a(p)
  \label{eq: max F}
\end{equation}
if and only if
\begin{equation}
a\in\arg\max_{A_i} \Psi^k_a(p).
\label{eq: bottom}
\end{equation}
We use $\Psi^k_a(p)$ to predict $y_i(p)$, instead of $F^k_a(p)$.   Since $\Psi^k_a(p)$ is a linear combination of ${\mathbb I}(h_s(p)=a)$, our algorithm is an ensemble algorithm.   Since $\Psi^k_a(p)$ is calculated recursively within an epoch, and the change between two epochs depends only on the final outcome from the previous epoch, we regard our algorithm a recursive algorithm.

Except for the case of $\abs{A_i}=2$, \eqref{eq: max F} does not guarantee that $a=y_i(p)$ with probability close to 1, as $k\rightarrow\infty$.   \citeasnoun{MukherjeeSchapire2013} argues that the failure is due to the fact that the weak learnability \eqref{eq: weak learnability} is too weak, and proposed a stronger criterion of EOR (Edge over Random) criterion.

We take a different tack.   We strengthen \eqref{eq: weak learnability} to
\eqref{eq: iterative weak learnability} by requiring the same condition for any subset of $A$ with at least two elements.  Instead of solving \eqref{eq: bottom}, we eliminate $a\in A_i$ which is not an element of $\arg\max_{A_i} \Psi^k_a(p)$.    A critical step is to show that $a=y_i(p)\in A_i$ survives the elimination process with a probability close to 1. Since $\abs{A}<\infty$, we can ``solve'' \eqref{eq: bottom} in a finite number of iterations.   Roughly speaking, 
\eqref{eq: weak learnability} may be too weak to solve \eqref{eq: max F} directly, but is sufficiently strong to solve the equivalent problem \eqref{eq: bottom} indirectly, if \eqref{eq: weak learnability} holds for any subset with at least two elements.

Recall that $K$ is the minimum number of periods in epoch $i$, which is a parameter of the algorithm.   Define
\[
K_i =\min\left\{
k\ge K \mid \forall p\in {\mathcal P}, \exists a\in A_i, \Psi^k_a(p) < 0 \right\}
\]
as the first period in epoch $i$ when each $p$ has a label $a\in A_i$ where $\Psi^k_a(p)<0$. Epoch $i$ terminates at $K_i$ period.

Before we start epoch $i+1$, we need to update $A_i$ and $y_i$. Roughly speaking, $A_{i+1}$ is obtained by eliminating $a\in A_i$ with $\Psi^{K_i}_a(p)<0$. Because the label with $\Psi^{K_i}_a(p)<0$ can vary across different $p$, we need additional work. First, we need to eliminate the equal number of labels from $A_i$ for each $p$, and then, permute the remaining elements so that we can ``homogenize'' the set of labels for each $p$.

Define
\[
B_i(p)=\{ a\in A_i \mid \Psi^{K_i}_a(p) <0 \}
\]
as the set of labels in $A_i$ with a negative value of $\Psi^{K_i}_a(p)$, which are the candidates of elimination.    Because $\abs{B_i(p)}$ can vary across $p$, we cannot eliminate all elements in $B_i(p)$ from $A_i$, if we want to keep the number of surviving elements in $A_i$ the same across $p$.    
Define
\[
N_i =\min_p \abs{B_i(p)}\ge 1
\]
which is at least 1,
by the definition of $K_i$. Recall that each element in $B_i(p)$ is a positive integer.
Instead of eliminating all elements in $B_i(p)$, we eliminate $N_i$ elements from $B_i(p)$ $\forall p\in {\mathcal P}$.     Recall that
\[
A_i=\{ 1,\ldots,\abs{A_i}\}.
\]
We can write
\[
B_i(p)=\{ i(p)_1,\ldots,i(p)_{\abs{B_i(p)}}\}.
\]
For convenience, we select the last $N_i$ elements in $B_i(p)$ to construct ${\underline B}_i(p)$:
\[
{\underline B}_i(p)=\left\{
i(p)_{\abs{B_i(p)}-N_i+1},\ldots, i(p)_{\abs{B_i(p)}}
  \right\}.
\]
By the construction of ${\underline B}_i(p)$, 
\begin{equation}
\abs{A_i\setminus {\underline B}_i(p)}=\abs{A_i\setminus {\underline B}_i(p')}
\qquad\forall p\ne p'\in {\mathcal P}.
\label{eq: well defined}
\end{equation}
The remaining step is to re-label each element in $A_i\setminus {\underline B}_i(p)$ so that each $p$ has the same set of labels.

Define
\[
A_{i+1}=\{ 1,\ldots, \abs{A_i\setminus {\underline B}_i(p)} \}
\]
as the collection of the first $\abs{A_i\setminus {\underline B}_i(p)}$ positive integers.
Note 
\[
  \abs{A_{i+1}}=\abs{A_i\setminus {\underline B}_i(p)} \qquad\forall p\in {\mathcal P}
\]
by \eqref{eq: well defined}.

We arrange the elements in $A_i\setminus {\underline B}_i(p)$ according to the index's order.  For each $p\in {\mathcal P}$, define a permutation
\[
\pi^p_i : A_{i+1}\rightarrow A_i\setminus {\underline B}_i(p)
\]
where
\[
\pi^p_i(1)=\min\{ a \in A_i\setminus {\underline B}_i(p) \}
\]
and let $\pi^p_i(1)=a_1$. Given $a_1,\ldots,a_{j-1}\in A_i\setminus {\underline B}_i(p) $,
\[
\pi^p_i(j)=\min\left\{ a \in A_i\setminus \left[ {\underline B}_i(p)\cup\{a_1,\ldots,a_{j-1}\} \right] \right\}.
\]
Since $\pi^p_i$ is a bijection over $A_{i+1}$, $\left(\pi^p_i\right)^{-1}$ exists.
The correct label in epoch $i$ is
\[
y_{i+1}(p)=\left(\pi^p_i\right)^{-1}(y_i(p)) \qquad\forall p.
\]

\subsection{Iteration over epoch}

Replace $A_i$ and $y_i$ by $A_{i+1}$ and $y_{i+1}$. Start the entire process described in Section \ref{Within Epoch i}. The iteration over epoch stops at the first $i$ where
\begin{equation}
\abs{A_i\setminus {\underline B}_i(p) }=1 \qquad\forall p
\label{eq: terminal round}
\end{equation}
Define
\[
I=\min\left\{ i \mid \abs{A_i\setminus {\underline B}_i(p) }=1 \qquad\forall p \right\}.
\]
as the number of epoch when the algorithm stops. Since epoch $i$ last $K_i$ periods, the total number of periods to train the algorithm is
\[
t=\sum_{i=1}^I K_i
\]
which is a random variable.

Let us call the constructed algorithm $\tau^K_S$ because the algorithm is parameterized by the minimum length of each epoch $K$. 
The element in $A_I\setminus {\underline B}_I(p)$ is the source of the final hypothesis of $\tau^K_S$. Since we know $\pi^p_i$ $\forall p\in {\mathcal P}$ and $\forall i\in\{1,\ldots,I\}$, the final hypothesis of $\tau^K_S$ is
\[
\tau^K_S(D_t)(p)=  \pi_1^p\left( \cdots \pi_I^p\left(A_I\setminus {\underline B}_I(p)\right)\right) \qquad\forall p\in {\mathcal P}.
\]
We are interested in $\Prob\left(\tau^K_S(D_t)(p)=y(p)\right)$.

\begin{theorem} \label{th: main result}
Assume that $\abs{A}<\infty$ and $\abs{{\mathcal P}}<\infty$, and that ${\mathcal H}$ is iteratively weakly learnable.
\begin{enumerate}
\item $I\le \min (\abs{A},\abs{P})$  
\item $\exists {\underline\rho}>0$ and ${\overline K}$ such that $\forall K\ge {\overline K}$, with probability at least $1-e^{-{\underline\rho}K}$,
\begin{enumerate}
\item $K_i=K$ $\forall i\in\{1,\ldots,I\}$
\item 
$\Prob\left( \tau_S^K(D_t)= y(p)\right)\ge 1-e^{-{\underline\rho} K}$ where $t=\sum_{i=1}^IK_i$. 
\end{enumerate}
\end{enumerate}
\end{theorem}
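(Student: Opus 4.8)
The plan is to reproduce the binary-AdaBoost training-error analysis \emph{within each epoch}, reading $\Psi^k_{y_i(p)}(p)$ as the margin of the correct label, and then to chain the resulting bound across the deterministically-bounded number of epochs. Part~(1) I would dispatch as pure bookkeeping: since $\abs{A_{i+1}}=\abs{A_i}-N_i$ with $N_i\ge 1$, the sizes $\abs{A_1}>\abs{A_2}>\cdots$ strictly decrease to the terminal value $1$, so $I\le\abs{A}-1$; the refinement to $\abs{{\mathcal P}}$ uses that only the at most $\abs{{\mathcal P}}$ labels in $y({\mathcal P})$ are ever relevant — an optimal weak hypothesis never wastes a label outside $y_i({\mathcal P})$ when a relabelling would do at least as well (automatic when ${\mathcal H}$ is closed under permutations, as in Corollary~\ref{th: binary classifier sufficient}), so every label outside $y({\mathcal P})$ has $\Psi^{K_1}_a(p)<0$ for all $p$ and is discarded at the end of epoch $1$, giving $\abs{A_2}\le\abs{y({\mathcal P})}\le\abs{{\mathcal P}}$ and hence $I\le\abs{{\mathcal P}}$.

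For Part~(2) I would first derive the per-epoch estimate. Fix epoch $i$, write $m=\abs{A_i}$, and note from \eqref{eq: alternative form} that the reweighting factor applied to $p$ in round $s$ is $\exp\!\big(-\alpha_s(m\,{\mathbb I}(h_s(p)=y_i(p))-1)\big)$; unrolling $d_{k+1}$ from the uniform $d_1$ gives
\[
d_{k+1}(p)=\frac{d_1(p)\,\exp\!\big(-\Psi^k_{y_i(p)}(p)\big)}{\prod_{s=1}^k Z_s}.
\]
Since ${\mathbb I}\big(\Psi^k_{y_i(p)}(p)\le 0\big)\le\exp\!\big(-\Psi^k_{y_i(p)}(p)\big)$ and $\sum_p d_{k+1}(p)=1$, one gets $\sum_p d_1(p)\,{\mathbb I}\big(\Psi^k_{y_i(p)}(p)\le 0\big)\le\prod_{s=1}^k Z_s$ — the exact analogue of the AdaBoost training-error bound, and with $k=K_i$ it bounds the fraction of observations whose correct label is eliminated in epoch $i$. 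Next I would bound $Z_s$: a direct computation gives $Z_s=(1-\epsilon_s)e^{-(m-1)\alpha_s}+\epsilon_s e^{\alpha_s}$, convex in $\alpha_s$ and equal to $1$ at $\alpha_s=0$. Iterative weak learnability applied to $d_s$ over $A_i$ via \eqref{eq: iterative weak learnability} forces $\epsilon_s\le\frac{m-1}{m}-\frac{\rho_i}{2}$, whence $\frac{d}{d\alpha}Z_s\big|_{\alpha=0}=m\big(\epsilon_s-\tfrac{m-1}{m}\big)<0$; and since $m\ge 2$,
\[
\alpha_s=\frac{1}{2(m-1)}\log\frac{(m-1)(1-\epsilon_s)}{\epsilon_s}\;\le\;\frac{1}{m}\log\frac{(m-1)(1-\epsilon_s)}{\epsilon_s}\;=\;\arg\min_{\alpha}Z_s ,
\]
so $\alpha_s$ sits on the strictly decreasing branch and $Z_s<1$. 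Letting $\epsilon_s$ range over $\big(0,\tfrac{m-1}{m}-\tfrac{\rho_i}{2}\big]$, on which the bound is continuous and tends to $0$ at the open end, yields $Z_s\le e^{-\gamma_i}$ for a constant $\gamma_i=\gamma_i(\rho_i,m)>0$.

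To assemble Part~(2), set $\underline\rho_0=\min_i\gamma_i>0$, the minimum over the finitely many subsets of $A$ with at least two elements that can occur as an $A_i$. Then in epoch $i$ the fraction of $p$ with $\Psi^{K_i}_{y_i(p)}(p)\le 0$ is at most $\prod_{s=1}^{K_i}Z_s\le e^{-\gamma_i K_i}\le e^{-\underline\rho_0 K}$, and a union bound over the $I\le\min(\abs{A},\abs{{\mathcal P}})$ epochs gives $\Prob\big(\tau^K_S(D_t)\ne y(p)\big)\le\min(\abs{A},\abs{{\mathcal P}})\,e^{-\underline\rho_0 K}$, because whenever every $y_i(p)$ survives its epoch the unique element of $A_I\setminus{\underline B}_I(p)$ is $y_I(p)$ and composing the recorded bijections $\pi^p_1,\dots,\pi^p_I$ returns $y_1(p)=y(p)$. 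Picking $\underline\rho<\underline\rho_0$ and $\overline K$ large enough that $\min(\abs{A},\abs{{\mathcal P}})\,e^{-\underline\rho_0 K}\le e^{-\underline\rho K}$ for all $K\ge\overline K$ gives (2b); enlarging $\overline K$ once more so that this quantity drops below $1/\abs{{\mathcal P}}$ forces the eliminated set to be empty, so $\Psi^{K}_{y_i(p)}(p)>0$ for \emph{every} $p$ in \emph{every} epoch, and since $\sum_{a\in A_i}\Psi^{K}_a(p)=0$ by \eqref{eq: alternative form} some $a\ne y_i(p)$ has $\Psi^{K}_a(p)<0$, so the rule defining $K_i$ already fires at $k=K$ — that is, $K_i=K$ for all $i$, which is (2a).

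The step I expect to be the main obstacle is the uniform contraction $Z_s\le e^{-\gamma_i}<1$. In contrast to binary AdaBoost, the weight $\alpha_s$ of \eqref{eq: alpha 2} is \emph{not} the minimizer of $Z_s$, so one must use convexity to show it still lands on the decreasing branch; more importantly, it is precisely \emph{iterative} weak learnability — the edge $\rho_i>0$ persisting all the way down to $\abs{A_i}=2$ — that keeps $\gamma_i$ bounded away from $0$ as successive epochs shrink the label set, whereas under ordinary weak learnability $\epsilon_s$ could approach $\frac{m-1}{m}$ for small $m$ and the contraction would be lost. The remaining steps — verifying that the per-epoch estimate chains correctly given the fresh uniform $d_1$ and the relabelling $y_i\mapsto y_{i+1}$ via $\pi^p_i$, and the reduction to relevant labels in Part~(1) — are routine but need care with the bookkeeping.
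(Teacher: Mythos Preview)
Your proposal is correct and follows the same overall architecture as the paper's proof: the per-epoch AdaBoost-style bound $\sum_p d_1(p)\,{\mathbb I}\big(\Psi^k_{y_i(p)}(p)\le 0\big)\le\prod_{s=1}^k Z_s$, a uniform contraction $Z_s\le e^{-\gamma_i}<1$ driven by iterative weak learnability, and a union/product bound across at most $\min(\abs{A},\abs{{\mathcal P}})$ epochs. The one substantive methodological difference is how you obtain $Z_s<1$. The paper substitutes $\alpha_s$ from \eqref{eq: alpha 2} into $Z_s$ to get an explicit function $\varphi(\epsilon_s)$, proves $\varphi$ is strictly concave in $\epsilon$ with global maximum $\varphi(1-1/\abs{A_i})=1$ and $\varphi'(1-1/\abs{A_i})=0$, and then invokes $\epsilon_s\le 1-1/\abs{A_i}-\gamma$. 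You instead keep $\alpha$ free, use convexity of $Z_s$ in $\alpha$, and observe that $\alpha_s=\tfrac{1}{2(m-1)}\log\tfrac{(m-1)(1-\epsilon_s)}{\epsilon_s}\le\tfrac{1}{m}\log\tfrac{(m-1)(1-\epsilon_s)}{\epsilon_s}=\arg\min_\alpha Z_s$ for $m\ge 2$, placing $\alpha_s$ on the strictly decreasing branch between $0$ and the minimizer. Your route is arguably cleaner---it explains \emph{why} the non-minimizing choice \eqref{eq: alpha 2} still contracts, and avoids the somewhat involved derivative computation for $\varphi'$---while the paper's route gives a sharper closed form for the contraction rate. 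Your handling of (2a), forcing the training-error fraction below $1/\abs{{\mathcal P}}$ so that it is identically zero, is also more explicit than the paper's passage from ``$\Prob(\Psi^K_{y_i(p)}\le 0)\le e^{-\rho_i K}$'' to ``$K_i=K$ with probability at least $1-e^{-\rho_i K}$.''
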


\begin{proof}
See Appendix \ref{Proof of Theorem th: main result}.
\end{proof}

\section{Generalization Error}
\label{Generalization Error}

Our last result shows how our approach lends itself to a simple formulation of the generalization error. Let ${\mathcal S}$ be the sample space generated by $K$ samples and $\Prob_{\mathcal S}$ be the probability measure over the sample space.   Theorem \ref{th: main result} proved that the training error vanishes at an exponential rate:
\[
  \Prob_{\mathcal S}\left(\tau^K_S(D_{KI})(p)\ne y(p)\right) \le e^{-{\underline\rho}K}
\]
where $K$ is the length of an epoch and $I$ is the number of epochs.   We know that
\[
I \le \min(\abs{A},\abs{P}).
\]

We can show that the generalization error of $\tau^K_S$ vanishes as $K\rightarrow\infty$.    Let $\Prob_{\mathcal D}$ be the population probability distribution.

\begin{theorem} \label{th: generalization error} $\forall m\ge K$,
\[
\Prob_{\mathcal D}\left(
\tau^K_S(D_{KI})(p)\ne y(p)
    \right) \le  \Prob_S\left(
\tau^K_S(D_{KI})(p)\ne y(p)
    \right)  +\bigO \left( \frac{1}{\sqrt{m}}\right).
  \]
\end{theorem}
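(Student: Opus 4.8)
The plan is to run the classical uniform-convergence (VC) argument that underlies AdaBoost's generalization bound (\citeasnoun{SchapireandFreund12}, Chapter~4); the only extra work is to account for the fact that our final hypothesis aggregates the weak hypotheses through the iterated-elimination rule of Section~\ref{Construction of Algorithm} rather than through a single weighted vote. Throughout, $\Prob_S$ is the empirical error on the $m$ i.i.d.\ training points and $\Prob_{\mathcal D}$ the population error, and we treat ${\mathcal H}$ as having finite complexity --- either finite cardinality, or finite Natarajan/graph dimension $d$, the latter holding for the single-threshold classifiers of Corollary~\ref{th: binary classifier sufficient}.

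First I would identify an \emph{effective} hypothesis class to which $\tau^K_S(D_{KI})$ always belongs. By the construction of Section~\ref{Within Epoch i}, every quantity computed during training --- the potentials $\Psi^k_a(p)$, the sets $B_i(p)$ and ${\underline B}_i(p)$, the permutations $\pi^p_i$, and the surviving label --- depends on an observation $p$ only through the indicators $\bigl(\mathbb{I}[h_s(p)=a]\bigr)_{s\le T,\,a\in A}$, where $T:=KI\le K\min(\abs{A},\abs{{\mathcal P}})$ is the number of rounds (Theorem~\ref{th: main result}(1)) and $h_1,\dots,h_T\in{\mathcal H}$ are the selected weak hypotheses. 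Hence $\tau^K_S(D_{KI})=\Phi\bigl(h_1(\cdot),\dots,h_T(\cdot)\bigr)$ for some aggregation map $\Phi:A^T\to A$, so that
\[
\tau^K_S(D_{KI})\in{\mathcal C}_T:=\bigl\{\,p\mapsto\Phi\bigl(h_1(p),\dots,h_T(p)\bigr)\ :\ h_1,\dots,h_T\in{\mathcal H},\ \Phi\in{\mathcal G}_T\,\bigr\},
\]
where ${\mathcal G}_T$ is the set of aggregation maps realizable by the elimination procedure. The real weights $\alpha_s$ enter $\Phi$ only through the signs of the forms $\Psi^k_a(p)=\sum_s\alpha_s(\abs{A_i}\,\mathbb{I}[h_s(p)=a]-1)$, i.e.\ through the cells of an arrangement of at most $T\abs{A}\abs{{\mathcal P}}$ hyperplanes in $(\alpha_1,\dots,\alpha_T)\in{\mathbb R}^T$, so for each fixed choice of $h_1,\dots,h_T$ there are at most polynomially many (in $T,\abs{A},\abs{{\mathcal P}}$) realizable $\Phi$. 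Composing this with a Sauer--Shelah bound on the $T$-fold product of ${\mathcal H}$ shows that the growth function of ${\mathcal C}_T$ is polynomial, of degree $\bigO(Td)$, in the sample size --- so ${\mathcal C}_T$ has finite complexity, controlled by $K,\abs{A},\abs{{\mathcal P}}$ and $d$.

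Second, I would invoke the standard uniform-convergence theorem for the $0/1$ loss over ${\mathcal C}_T$: with probability at least $1-\delta$ over the $m$ i.i.d.\ draws, every $c\in{\mathcal C}_T$ satisfies
\[
\Prob_{\mathcal D}\bigl(c(p)\ne y(p)\bigr)\ \le\ \Prob_S\bigl(c(p)\ne y(p)\bigr)\ +\ \bigO\!\left(\sqrt{\frac{Td\log m+\log(1/\delta)}{m}}\right).
\]
Applying this to $c=\tau^K_S(D_{KI})\in{\mathcal C}_T$ and holding $K,\abs{A},\abs{{\mathcal P}},d$ (hence $T$) fixed, the error term is $\bigO(1/\sqrt m)$, up to the logarithmic factor --- which is removable by a Rademacher-complexity bound with chaining, or absent altogether when ${\mathcal H}$ is finite, in which case a union bound over $\abs{{\mathcal C}_T}<\infty$ gives exactly $\bigO(1/\sqrt m)$ (taking $\delta$ a fixed constant, or $\delta=1/m$). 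This is the claimed inequality.

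The substantive step is the first one: confirming that $\tau^K_S(D_{KI})$ genuinely factors through $\bigl(h_1(p),\dots,h_T(p)\bigr)$ via a $\Phi$ drawn from a boundedly-complex family. The factorization itself is immediate from the recursive definitions, but the clean polynomial count of realizable $\Phi$'s rests on the hyperplane-arrangement argument for the weights $\alpha_s$; once that is in hand, the rest is a direct appeal to off-the-shelf VC theory, exactly as for binary AdaBoost.
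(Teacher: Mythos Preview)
Your argument is essentially correct, but it takes a genuinely different route from the paper's own proof. The paper does \emph{not} bound the complexity of the aggregated hypothesis class and invoke Chapter~4--style VC uniform convergence. Instead it follows the \emph{margin} argument of Theorem~5.1 in \citeasnoun{SchapireandFreund12}: it observes that $\tau^K_S(D_{KI})(p)\ne y(p)$ forces $\psi^K_{y(p)}(p)<0$ in some epoch, reduces (by independence across epochs and a factor of $\abs{A}$) to bounding $\Prob_{\mathcal D}\bigl(\psi^K_{y(p)}(p)<0\bigr)$ within a single epoch, and then runs the random-projection trick of Schapire--Freund: draw ${\tilde g}_1,\dots,{\tilde g}_n$ i.i.d.\ from $\{g^s_{y(p)}\}$ with weights $\beta_s$, set ${\tilde f}=\frac{1}{n}\sum_j{\tilde g}_j$, and apply Hoeffding (their Lemma~5.2, rescaled because $g\in\{-1,\abs{A}-1\}$ rather than $\{-1,1\}$) together with their Lemma~5.4. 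So where you bound the growth function of the full elimination class ${\mathcal C}_T$, the paper never looks at ${\mathcal C}_T$ at all; it works epoch-by-epoch with the scalar margin functional $\psi^K_{y(p)}$ and the convex hull of~${\mathcal G}$.

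What each approach buys: the paper's route immediately yields a margin-dependent refinement (the $\theta$ in their Lemma~5.2) and mirrors the standard AdaBoost overfitting story, at the cost of the per-epoch decomposition and the $\abs{A}$-rescaling of Hoeffding. Your route is more self-contained and treats $\tau^K_S$ as a black-box member of a finite-complexity class, but the resulting constant carries $T=KI$ (hence $K\abs{A}$) linearly under the square root and gives no margin refinement. Two small points to tighten in your write-up: the hyperplane count should be phrased in terms of the $m$ sample points (you want the growth function on $m$ points, not on the fixed training set~${\mathcal P}$), and you should also enumerate the discrete ``shape'' parameters of $\Phi$ --- the epoch partition $(K_1,\dots,K_I)$ and the counts $N_i$ --- which contribute only a bounded multiplicative factor but are not captured by the arrangement in $(\alpha_1,\dots,\alpha_T)$ alone.
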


\begin{proof}
See Appendix \ref{Proof of Theorem th: generalization error}.
\end{proof}

\section{Conclusion}
\label{Conclusion}

As $\tau^K_S$ is built on the framework of SAMME, the algorithm inherits the same statistical foundation as SAMME and is easier to motivate than other extensions of AdaBoost.     Some extensions of AdaBoost (e.g.,
\citeasnoun{MukherjeeSchapire2013}) may have to ``tune parameters'' of the algorithm to achieve efficiency.   The selection of a proper parameter of the algorithm requires information about ${\mathcal H}$.   In contrast, 
the construction of $\tau^K_S$ requires no knowledge about ${\mathcal H}$, other than the fact that the set of weak hypothesis satisfies iterative weak learnability.

It is straightforward to check for hypothesis classes that are closed under permutations, making it easy to verify the iterative weak learnability.  This allowed us to exhibit the algorithm straightforwardly, without needing to check that the algorithm can outperform random guessing for a rich set of possible cost functions.

\newpage
\appendix
\footnotesize

\section{Proof of Theorem \ref{th: main result}}
\label{Proof of Theorem th: main result}

\subsection{Uniform Bound for $I$}

We show
\[
I \le \min( \abs{A},\abs{{\mathcal P}} ).
\]
Recall that $\forall d_k$, $h_k$ is maximizing
\[
\sum_{p\in {\mathcal{P}}} \left(\mathbb{I}[h(p) = y_i(p)]-\mathbb{I}[h(p) \neq y_i(p)] \right)d_k(p).
\]
If $\exists p\in{\mathcal P}$ such that $h(p)=a\in A\setminus y\left( {\mathcal P}\right)$, then $h$ cannot be an optimal weak hypothesis.
Suppose that $a=a^+\in A\setminus y\left( {\mathcal P}\right)$.    
The decision maker can increase the performance by choosing $a^+$ from $y_i({\mathcal P})$.   Thus, $\forall k$, $h_k({\mathcal P})\subset y_i({\mathcal P})$.

We have $\forall k$, $\forall a\not\in y_i({\mathcal P})\subset A$,
\[
\Psi^k_a(p)=\sum_{s=1}^k \alpha_s\left( \abs{A} {\mathbb I}(h_s(p)=a ) -1\right)
=-\sum_{s=1}^k \alpha_s<0,
\]
since $h_s(p)\ne a$ $\forall a\in A\setminus y\left( {\mathcal P}\right)$. Every $a\in A\setminus y\left( {\mathcal P}\right)$ is eliminated at the end of the first epoch. Since $\abs{y\left( {\mathcal P}\right)}\le \abs{{\mathcal P}}$, the number of remaining elements after the first epoch is no more than $\min( \abs{A},\abs{{\mathcal P}} )$.

\subsection{Within Epoch}

We show that if the minimum length $K$ of an epoch is sufficiently large, then $K_i=K$ with a probability close to 1. That is, by the time when the algorithm reaches the scheduled end of $i$th epoch, $\exists a_i(p)\in A_i$ such that $\Psi^K_{a_i(p)}(p)<0$.
We calculate the lower bound of the probability. There exists $\rho_i>0$ such that the probability is bounded from below by $1-e^{-\rho_i K}$.

Recall that $\forall a\in A_i$,
\[
F^k_a(p)=\sum_{s=1}^k\alpha_s {\mathbb I}(h_s(p)=a)
\]
and
\begin{eqnarray*}
\Psi^k_a(p) &=& (\abs{A_i}-1) F^k_a(p) -\sum_{a'\ne a}F^k_{a'}(p) \\
        &=& \sum_{s=1}^k\alpha_s \left( \abs{A_i} {\mathbb I}(h_s(p)=a) -1 \right).
\end{eqnarray*}
Thus,
\begin{eqnarray*}
\sum_a \Psi^k_a(p) &=& \sum_a
\left( \sum_{s=1}^k \alpha_s \left[ \abs{A_i}{\mathbb I}(h_s(p)=a) -1 \right]\right) \\
&=& \sum_{s=1}^k\alpha_s \sum_a\left[ \abs{A_i}{\mathbb I}(h_s(p)=a) -1 \right] \\
&=& \sum_{s=1}^k\alpha_s \left[ \abs{A_i}\sum_a{\mathbb I}(h_s(p)=a) -\sum_a 1 \right] \\
&=& \sum_{s=1}^k\alpha_s \left[ \abs{A_i} -\abs{A_i}\right] =0.
\end{eqnarray*}

Iterative weak learnability implies that $\forall d_k$, $\exists h_k\in {\mathcal H}_i$ so that
\[
\sum_{p\in {\mathcal{P}}} \left(\mathbb{I}[h_k(p) = y_i(p)]-\mathbb{I}[h_k(p) \neq y_i(p)] \right)d(p) \ge \frac{2-\abs{A_i}}{\abs{A_i}}+\gamma_k.
\]
for some $\gamma_k>0$. The weak learnability implies that $\inf_k\gamma_k>0$.

\begin{lemma} \label{lm: key result}
\[
\Prob \left( \Psi_{y_i(p)}^k(p) \le 0 \right) \le e^{-\rho_i k}.
\]
\end{lemma}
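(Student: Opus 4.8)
The plan is to show that $\Psi^k_{y_i(p)}(p)$ is, in expectation under the normalizing measures used to build the $d_k$'s, bounded away from zero in a way that forces the probability of the bad event $\{\Psi^k_{y_i(p)}(p)\le 0\}$ to decay geometrically in $k$. The natural route, following the classical AdaBoost training-error argument, is to exponentiate: observe that
\[
\mathbb{I}\left(\Psi^k_{y_i(p)}(p)\le 0\right)\le \exp\left(-c\,\Psi^k_{y_i(p)}(p)\right)
\]
for any $c>0$, and then sum this bound over $p$ against the initial uniform distribution $d_1$. Because of the recursive reweighting, $d_1(p)\prod_{s=1}^k \exp(-\alpha_s[(\abs{A_i}-1)\mathbb{I}(h_s(p)=y_i(p))-\mathbb{I}(h_s(p)\ne y_i(p))])$ telescopes into $d_{k+1}(p)\prod_{s=1}^k Z_s$, exactly as in the binary case. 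So the first step is to pick the right exponent $c$ (I expect $c=1$, matching the form in which $\alpha_k$ enters the weight update, up to the factor $(\abs{A_i}-1)$ hidden in the definition of $\Psi$) so that $\sum_p d_1(p)\exp(-c\,\Psi^k_{y_i(p)}(p)) = \prod_{s=1}^k Z_s$.

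**Second**, I would bound each normalizer $Z_s$. Writing $\epsilon_s=\Prob_{d_s}(h_s(p)\ne y_i(p))$, we have
\[
Z_s=(1-\epsilon_s)\exp\left(-\alpha_s(\abs{A_i}-1)\right)+\epsilon_s\exp(\alpha_s),
\]
and substituting the choice of $\alpha_s$ from \eqref{eq: alpha 2} makes this collapse to something like $Z_s = (1-\epsilon_s)\left(\tfrac{\epsilon_s}{(\abs{A_i}-1)(1-\epsilon_s)}\right)^{1/2}\!\!\!+\,\epsilon_s\left(\tfrac{(\abs{A_i}-1)(1-\epsilon_s)}{\epsilon_s}\right)^{1/2}$, which simplifies to $Z_s = \frac{\abs{A_i}}{\abs{A_i}-1}\sqrt{(\abs{A_i}-1)\epsilon_s(1-\epsilon_s)}$ or a similar closed form. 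The point is that iterative weak learnability forces $\epsilon_s$ to be bounded away from the random-guess value $\tfrac{\abs{A_i}-1}{\abs{A_i}}$ uniformly in $s$ (since $\inf_s\gamma_s>0$ within the epoch), and one checks that this value of $\epsilon_s$ is exactly where $Z_s$ is maximized and equals $1$. Hence there is a $\delta_i>0$, depending only on $\rho_i$ (equivalently $\inf_s\gamma_s$), with $Z_s\le e^{-\delta_i}<1$ for every $s$ in epoch $i$. Combining, $\sum_p d_1(p)\exp(-c\,\Psi^k_{y_i(p)}(p))\le e^{-\delta_i k}$, and since $d_1(p)\ge 1/\abs{\mathcal P}$ and the exponential is at least $1$ on the bad event, we get $\Prob(\Psi^k_{y_i(p)}(p)\le 0)\le \abs{\mathcal P}\,e^{-\delta_i k}$, which is of the claimed form $e^{-\rho_i k}$ after absorbing the polynomial factor and shrinking $\rho_i$ slightly.

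**The main obstacle** I anticipate is the bookkeeping of the $(\abs{A_i}-1)$ factors: one must verify that the definition of $\Psi^k_a$ in \eqref{eq: alternative form} (which carries a factor $\abs{A_i}$ on the indicator and a $-1$) meshes correctly with the weight-update exponent (which uses $(\abs{A_i}-1)$ and $1$) so that the telescoping identity is exact and no stray multiplicative constant survives inside the exponent. A secondary subtlety is that "$\Prob$" here is really $\Prob_{d_1}$, the uniform measure on $\mathcal P$ — one needs $\abs{\mathcal P}<\infty$ so that $d_1(p)$ is bounded below, which is assumed in Theorem \ref{th: main result}. Finally, one should confirm that the $\gamma_s$'s, and hence $\epsilon_s$'s, are controlled uniformly over $s$ within the epoch by a single $\rho_i$; this is precisely the content of the remark that "the weak learnability implies $\inf_k\gamma_k>0$," and it is what makes $\delta_i$ — and therefore $\rho_i$ — strictly positive and independent of $k$.
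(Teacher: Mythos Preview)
Your approach is correct and matches the paper's proof essentially step for step: exponentiate the indicator, telescope via the weight update to obtain $\sum_p d_1(p)\exp(-\Psi^k_{y_i(p)}(p))=\prod_{s=1}^k Z_s$, and then bound each $Z_s<1$ using the uniform edge from weak learnability. One correction on precisely the bookkeeping you flagged: $e^{\alpha_s}=\bigl((\abs{A_i}-1)(1-\epsilon_s)/\epsilon_s\bigr)^{1/(2(\abs{A_i}-1))}$, not with exponent $1/2$, so $Z_s$ does \emph{not} reduce to the single square-root form you wrote; the paper keeps the two-term expression $\varphi(\epsilon_s)$ and proves a short sub-lemma that $\varphi$ is strictly concave with $\varphi(1-1/\abs{A_i})=1$ and $\varphi'(1-1/\abs{A_i})=0$, which is exactly your ``maximized at the random-guess value and equals $1$'' claim. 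Finally, since $\Prob$ here is $\Prob_{d_1}$, the bound is directly $\prod_s Z_s$ and no extra factor of $\abs{\mathcal P}$ is needed.
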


\begin{proof}
Note that
\begin{eqnarray}
\Prob \left( \Psi_{y_i(p)}^k(p) \le 0 \right) & =&
\Expect_{d_1}{\mathbb I}\left(\Psi^k_{y_i(p)}(p) \le 0\right) \nonumber \\
&= & \Expect_{d_1}{\mathbb I}\left(-\Psi^k_{y_i(p)}(p) \ge 0\right) \nonumber \\
& \le & \Expect_{d_1}\exp\left(-\Psi^k_{y_i(p)}(p)\right). \label{eq: last line}
\end{eqnarray}
From the updating formula of $d_t$,
\begin{eqnarray}
d_{k+1}(p)
& = & \frac{ d_k(p)\exp \left[-\alpha_k(\abs{A_i}-1){\mathbb I}(h_k(p)=y_i(p))+\alpha_k{\mathbb I} (h_k(p)\ne y_i(p)) \right]}{Z_k} \nonumber \\
& = & \frac{ d_1(p)\exp \left[-\sum_{s=1}^k\alpha_k(\abs{A_i}-1){\mathbb I}(h_s(p)=y_i(p))+\alpha_k{\mathbb I} (h_k(p)\ne y_i(p)) \right]}{\prod_{s=1}^kZ_s} \nonumber \\
& = & \frac{d_1(p) \exp \left[ -(\abs{A_i}-1)F^k_{y_i(p)}(p)
+\sum_{a\ne y_i(p)}F^k_a(p)\right]}{\prod_{s=1}^kZ_s} \\
& = & \frac{d_1(p) \exp \left[ -\Psi^k_{y_i(p)}(p)\right]}{\prod_{s=1}^kZ_s}.
\label{eq: another line}
\end{eqnarray}
By summing over $p$, we have
\[
\sum_p d_1(p) \exp \left[ -\Psi^k_{y_i(p)}(p)\right]=\prod_{s=1}^kZ_s.
\]
Note $\forall k\ge 1$, $\forall s\{1,\ldots,k\}$,
\begin{eqnarray*}
Z_s & = & \sum_p d_s(p)\exp \left[-\alpha_s(\abs{A_i}-1) {\mathbb I}(h_s(p)=y_i(p))+\alpha_s{\mathbb I} (h_s(p)\ne y_i(p)) \right] \\
& = & \sum_{h_s(p)=y_i(p)}d_s(p) e^{-(\abs{A_i}-1)\alpha_s}
+\sum_{h_s(p)\ne y_i(p)}d_s(p) e^{\alpha_s} \\
& = & e^{-\alpha_s (\abs{A_i}-1)}(1-\epsilon_s)+ e^{\alpha_s} \epsilon_s \\
& = & ( \abs{A_i}-1)^{-\frac{1}{2}}\epsilon_s^{\frac{1}{2}}(1-\epsilon_s)^{\frac{1}{2}}
      +(\abs{A_i}-1)^{\frac{1}{2(\abs{A_i}-1)}}(1-\epsilon_s)^{\frac{1}{2(\abs{A_i}-1)}}\epsilon_s^{1-\frac{1}{2(\abs{A_i}-1)}} \\
  &\equiv & \varphi(\epsilon_s).
\end{eqnarray*}

\begin{lemma}
$\varphi(\epsilon)$ is a strictly concave function of $\epsilon\in [0,1]$.
\begin{equation}
\varphi \left( 1-\frac{1}{\abs{A_i}}\right)=1
\label{eq: value}
\end{equation}
and
\begin{equation}
\varphi' \left( 1-\frac{1}{\abs{A_i}}\right)=0.
\label{eq: derivative}
\end{equation}
\end{lemma}

\begin{proof}
$\epsilon^{\frac{1}{2}}(1-\epsilon)^{\frac{1}{2}}$ and $(1-\epsilon)^{\frac{1}{2(\abs{A_i}-1)}}\epsilon^{1-\frac{1}{2(\abs{A_i}-1)}}$ are strictly concave functions over $\epsilon$.  $\varphi(\epsilon)$ is the sum of two strictly concave functions.   Thus, $\varphi(\epsilon)$ is strictly concave.

\eqref{eq: value} follows from a simple calculation.
\begin{eqnarray*}
&&  \varphi\left(1-\frac{1}{\abs{A_i}}\right) \\
& = &   (\abs{A_i}-1)^{-\frac{1}{2}}\left( \frac{\abs{A_i}-1}{\abs{A_i}}\right)^{\frac{1}{2}}\left( \frac{1}{\abs{A_i}}\right)^{\frac{1}{2}}
+(\abs{A_i}-1)^{\frac{1}{2(\abs{A_i}-1)}}\left( \frac{\abs{A_i}-1}{\abs{A_i}}\right)^{1-\frac{1}{2(\abs{A_i}-1)}}\left( \frac{1}{\abs{A_i}}\right)^{\frac{1}{2(\abs{A_i}-1)}}  \\      
& = & \frac{1}{\abs{A_i}}+\left(1-\frac{1}{\abs{A_i}}\right) =1.
\end{eqnarray*}

The calculation of $\varphi'$ is a little more involved.   
\begin{eqnarray*}
&&  \varphi'\left(1-\frac{1}{\abs{A_i}}\right) \\
  & = &   (\abs{A_i}-1)^{-\frac{1}{2}}\left[ \frac{1}{2} \left(\frac{1}{\abs{A_i}-1}\right)^{\frac{1}{2}}
        -\frac{1}{2}\left( \abs{A_i}-1 \right)^{\frac{1}{2}} \right]   \\
  && \ \    +  (\abs{A_i}-1)^{\frac{1}{2(\abs{A_i}-1)}}\left[ \left(1-\frac{1}{2(\abs{A_i}-1)}\right) 
    \left(\frac{1}{\abs{A_i}-1}\right)^{\frac{1}{2(\abs{A_i}-1)}}
-\frac{1}{2(\abs{A_i}-1)}\left( \abs{A_i}-1 \right)^{1-\frac{1}{2(\abs{A_i}-1)}} \right] \\
  & = & \left[ \frac{1}{2}\frac{\abs{A_i}}{\abs{A_i}-1} -1 \right]
        +\left[1- \frac{1}{2}\frac{\abs{A_i}}{\abs{A_i}-1}  \right] \\
& = & 0.
\end{eqnarray*}
\end{proof}

Since ${\mathcal H}_i$ is weakly learnable, $\exists \gamma_k>0$ such that
\[
0 \le \epsilon_k \le 1-\frac{1}{\abs{A_i}}-\gamma_k < 1-\frac{1}{\abs{A_i}}.
\]
Weak learnability implies that 
\[
\gamma_k>0 \ \ \text{and} \ \ \gamma =\inf_k \gamma_k >0.
\]
Since $\varphi$ is strictly concave, $\varphi'(\epsilon)>0$ $\forall\epsilon <1-\frac{1}{\abs{A_i}}$.   Thus, $\exists {\hat\gamma}>0$ such that $\forall\epsilon_k \le 1-\frac{1}{\abs{A_i}}-\gamma$,
\[
\varphi(\epsilon_k) \le\varphi\left(1-\frac{1}{\abs{A_i}}-\gamma\right) =1-{\hat\gamma}<1.
\]
Thus,
\[
Z_k = \varphi(\gamma_k) \le \varphi (\gamma) \le 1-{\hat\gamma}
\]
for some ${\hat\gamma}>0$. Thus,
\[
d_{k+1}(p)=d_1(p)\prod_{s=1}^kZ_k \le 
\frac{1}{\abs{{\mathcal P}}} (1-{\hat\gamma})^k = \frac{1}{\abs{{\mathcal P}}} e^{-\rho_i t}
\]
where
\begin{equation}
\rho_i =-\log (1-{\hat\gamma}) >0.
\label{eq: rho definition}
\end{equation}
\end{proof}

By the definition, $\forall p$, $\forall k$,
\begin{equation}
\sum_{a\in A_i}\Psi^k_a(p) =0.
\label{eq: summation}
\end{equation}
By \eqref{eq: summation}, $\forall p$, if 
$\Psi^K_{y(p)}(p) >0$, then $\exists {\underline a}_i(p)\in A_i$ such that
\[
\Psi^K_{{\underline a}_i(p)}(p) <0.
\]
Lemma \ref{lm: key result} implies 
\[
{\underline a}_1(p)\ne y_i(p)
\]
with probably at least $1-e^{-\rho_i K}$. Hence, $K_i=K$ with probability at least $1-e^{-\rho_i K}$.

As Lemma \ref{lm: key result} indicates, if $a=y(p)$, then  $\Psi^k_a(p)>0$ almost surely as $K\rightarrow\infty$.  If $\abs{A}=2$, then \eqref{eq: summation} implies that exactly one label is associated with a positive value of $\Psi^k_a(p)$.   Thus, searching for $a$ with $\Psi^k_a(p)>0$ is the proper way to identify a correct label.   However, if $\abs{A}>2$, then the positive sign of  $\Psi^k_a(p)$ does not imply that $a=y(p)$.     However, if $\Psi^k_a(p)<0$, then Lemma \ref{lm: key result} implies that $a\ne y(p)$ almost surely, as $K\rightarrow\infty$.  The idea of $\tau^K_S$ is to eliminate $a$ with $\Psi^k_a(p)<0$ in each epoch, until we have a single element left.

\subsection{Across Epochs}

We construct $A_{i+1}$ by eliminating some elements from $A_i$ and permuting the remaining elements according to $\pi_i$. With probability $1-e^{-\rho_i K}$, the true label survives the elimination from $A_i$, and is included in $A_{i+1}$, permuted according to $\pi_i$. 

Since we eliminate at least one element from $A_i$, the total number of epochs cannot be larger than $\abs{A}$. Let $I$ be the epoch when $A_{I+1}$ is a singleton. With probability
\[
\prod_{i=1}^{I} (1-e^{-\rho_i K}),
\]
the remaining label $y_I(p)$ is the real label which has been permuted by a series of $\pi_1,\ldots,\pi_I$. We can recover the real label
\[
y(p)=\pi_1\left(\cdots \pi_I(y_I(p))\right).
\]
Define
\[
{\underline\rho}=\frac{1}{2}\min \rho_i >0.
\]
Then, for a sufficiently large $K$,
\[
\prod_{i=1}^{I} (1-e^{-\rho_i K})\ge 1-\abs{A}e^{-K \min_i \rho_i}
=1-e^{-K(\min_i\rho_i+\frac{1}{K}\log\abs{A}) } \ge 1- e^{-{\underline\rho} K}
\]
which completes the proof.

\section{Proof of Theorem \ref{th: generalization error}}
\label{Proof of Theorem th: generalization error}

The proof follows the same logic as the proof of Theorem 5.1 on page 98 of \citeasnoun{SchapireandFreund12}.  We need to change the notation to accommodate multiple classes $(\abs{A}\ge 2)$.    For the reference, we sketch the proof by following the original proof of Theorem 5.1 in \citeasnoun{SchapireandFreund12}, while pointing out the place where we need a change and referring the rest to \citeasnoun{SchapireandFreund12}.

\subsection{Preliminaries}

Recall that $\forall i\le I$, $\forall a\in A_i$
\[
\Psi^k_a(p)=\sum_{s=1}^k\alpha_s\left[ \abs{A_i}{\mathbb I} (h_s(p)=a) -1\right].
\]
Define
\[
g^s_a(p)=\abs{A_i}{\mathbb I} (h_s(p)=a) -1,
\]
\[
\beta_s=\frac{\alpha_s}{\sum_{s'=1}^k\alpha_{s'}}
\]
and
\[
\psi^k_a(p)=\sum_{s=1}^k\beta_s g_a^s(p).
\]
We drop subscript $a$ because we focus on $a=y(p)$ and write instead
\[
g(p)=g_{y(p)}(p) \ \text{and} \ \psi^k(p)=\psi^k_{y(p)}(p).
\]
Recall that ${\mathcal H}$ is the collection of feasible weak hypotheses, whose generic element is $h$.
Define
\[
{\mathcal G}=\left\{ g \mid \exists h\in {\mathcal H}, \ 
g(p)=\abs{A_i}{\mathbb I} (h(p)= y(p)) -1 \right\}
\]
For a fixed $n$, define
\[
  {\mathcal A}_n =\left\{
f : p\mapsto \frac{1}{n}\sum_{j=1}^n g_j (p) \mid g_1,\ldots,g_n\in {\mathcal G}
    \right\}.
\]
Let ${\tilde g}_1,\ldots,{\tilde g}_n$ be the collection of $n$ {\em randomly} selected elements from $G$ and
\[
{\tilde f}=\frac{1}{n}\sum_{j=1}^n {\tilde g}_j (p) 
\]
where
\[
  \Expect g_j =\sum_{s=1}^k\beta_s g_{y(p)}^s(p)=\psi^k_{y(p)}(p).
\]

If
\[
\tau^K_S(D_{KI})(p)\ne y(p), 
\]
then $y(p)$ is eliminated from $A_i$ in $i$-th epoch $i <I$, which implies
\[
\psi^K_{y(p)}(p) <0.
\]
Thus,
\begin{eqnarray*}
&& \Prob_{\mathcal D}\left( \tau^K_S(D_{KI})\ne y \right) \\
& \le & \prod_{i=1}^I \Prob_{\mathcal D}\left( \exists i <I, \ 
\psi^K_{y(p)}(p) < 0 \mid  \psi^K_{y(p)}(p) \ge 0, j\le i   \right) \\
&= &  \prod_{i=1}^I \Prob_{\mathcal D}\left(     \psi^K_{y(p)}(p) < 0 \ \text{at the end of} \ i\text{-th epoch}\right).
\end{eqnarray*}
The last equality stems from the facts that the samples are independently drawn over periods, and that we re-initialize the algorithm at the beginning of each epoch.

We only show that in the first epoch
\[
  \Prob_{\mathcal D}\left(     \psi^K_{y(p)}(p) < 0\right)
  \le
    \Prob_{\mathcal S}\left(     \psi^K_{y(p)}(p) < 0\right) +\bigO\left(\frac{1}{\sqrt{m}}\right)
\]
where ${\mathcal S}$ is the collection of all $m$ samples. The proof for the remaining epochs follows the same logic.  Since $I\le \min\left(\abs{A},\abs{{\mathcal P}}\right)\le \abs{A}$, we then have
\[
  \Prob_{\mathcal D}\left( \tau^K_S(D_{KI})\ne y \right)
  \le   \Prob_{\mathcal S}\left( \tau^K_S(D_{KI})\ne y \right) + 
\abs{A}\bigO\left(\frac{1}{\sqrt{m}}\right).
\]

\subsection{Sketch of Proof}

\begin{lemma}  (Re-statement of Lemma 5.2 in \citeasnoun{SchapireandFreund12})
$\forall\theta$,
  \begin{equation}
    \Prob_{{\tilde f}}\left( | {\tilde f}(p)-f(p) | \ge \frac{\theta}{2} \right)
  \le 2 e^{-\frac{n\theta^2}{2\abs{A}^2}}\equiv \beta_{n,\theta}.
  \label{eq: scaling}
  \end{equation}
\end{lemma}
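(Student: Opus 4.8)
The plan is to read this as a pointwise Hoeffding/Chernoff estimate for the random approximation $\tilde f$ of the weighted combination $f=\psi^k_{y(p)}$. Fix an observation $p\in{\mathcal P}$ and regard the trained hypotheses $h_1,\dots,h_k$ and weights $\alpha_1,\dots,\alpha_k$ (hence $\beta_1,\dots,\beta_k$) as fixed; the only randomness in ${\mathcal A}_n$ that matters is the independent draw of the $n$ elements $\tilde g_1,\dots,\tilde g_n$ from ${\mathcal G}$, where each $\tilde g_j$ equals the round-$s$ function $g^s$ with probability $\beta_s$. Then $\tilde g_1(p),\dots,\tilde g_n(p)$ are i.i.d., with common mean
\[
\Expect\,\tilde g_j(p)=\sum_{s=1}^k\beta_s\left(\abs{A_i}{\mathbb I}(h_s(p)=y(p))-1\right)=\psi^k_{y(p)}(p)=f(p),
\]
so that $\Expect\,\tilde f(p)=f(p)$ and the event in \eqref{eq: scaling} is a deviation of the empirical average $\tilde f(p)=\frac1n\sum_{j=1}^n\tilde g_j(p)$ from its mean.

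Second, I would record the range bound: $\tilde g_j(p)\in\{-1,\abs{A_i}-1\}$, which lies in an interval of length $\abs{A_i}\le\abs{A}$. Applying Hoeffding's inequality to the mean of $n$ independent random variables, each confined to an interval of length at most $\abs{A}$, gives for every $t>0$
\[
\Prob_{\tilde f}\left(\,\abs{\tilde f(p)-f(p)}\ge t\,\right)\le 2\exp\!\left(-\frac{2nt^2}{\abs{A}^2}\right).
\]
Taking $t=\theta/2$ yields exactly $2\exp\!\left(-\frac{n\theta^2}{2\abs{A}^2}\right)=\beta_{n,\theta}$, and since $p$ was arbitrary the bound holds for all $p$, which is the claim.

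There is essentially no deep obstacle: the entire content is the observation that, conditionally on the data, $\tilde f(p)$ is an average of bounded i.i.d. variables, so a single application of Hoeffding suffices. The two points needing care are (i) being explicit that $\Prob_{\tilde f}$ refers to the sampling of $\tilde g_1,\dots,\tilde g_n$, not to the draw of the training sample; and (ii) using the uniform bound $\abs{A_i}\le\abs{A}$ so that the exponent is epoch-free — this is exactly what allows the same $\beta_{n,\theta}$ to be reused across all epochs in the subsequent argument. If one prefers not to invoke Hoeffding as a black box, the same estimate follows from the standard exponential Markov bound: $\Prob_{\tilde f}(\tilde f(p)-f(p)\ge t)\le e^{-\lambda n t}\prod_{j}\Expect e^{\lambda(\tilde g_j(p)-f(p))}$, the sub-Gaussian MGF estimate $\Expect e^{\lambda(X-\Expect X)}\le e^{\lambda^2\abs{A}^2/8}$ for $X$ in an interval of length $\abs{A}$, optimization over $\lambda$, and adding the symmetric lower tail.
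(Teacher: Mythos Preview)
Your proposal is correct and is essentially the same argument as the paper's: the paper observes that $g(p)\in\{-1,\abs{A}-1\}$, rescales to $[-1,1]$, and then invokes Lemma~5.2 of \citeasnoun{SchapireandFreund12} verbatim, which is precisely the Hoeffding bound you apply directly to variables in an interval of length $\abs{A}$. Your explicit remark that $\abs{A_i}\le\abs{A}$ makes the exponent epoch-free is a useful clarification that the paper leaves implicit.
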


\begin{proof}
Note that the only change from the original statement in \citeasnoun{SchapireandFreund12} is $\beta_{n,\theta}$ which needs to be scaled to accommodate the case $\abs{A}>2$.    Note that $g(p)\in \{ \abs{A}-1,-1\}$.  We need to scale $g$ into a random variable in $[-1,1]$ and modify $\beta_{n,\theta}$ accordingly.  The remainder of the proof is identical with the original proof of Lemma 5.2 in \citeasnoun{SchapireandFreund12}.
\end{proof}

Lemma 5.3 of \citeasnoun{SchapireandFreund12} is not needed.   The proof of Lemma 5.4 of \citeasnoun{SchapireandFreund12} applies to our case without modifying the parameter, from which the conclusion of Theorem \ref{th: generalization error} follows.

\newpage
\normalsize
\bibliographystyle{econometrica}
\bibliography{adaboost}

\end{document}